\newlength\myindent
\newcommand{\argmax}{\operatornamewithlimits{argmax}}
\newcommand{\pr}[2]{\underset{#1}{\Pbb}\left(#2\right)}
\newcommand{\ex}[2]{\underset{#1}{\Ebb}\left(#2\right)}
\newtheorem{lemma}{Lemma}
\newtheorem{theorem}{Theorem}
\newcommand{\Rbb}{\mathbb{R}}
\newcommand{\Sbb}{\mathbb{S}}
\newcommand{\bb}[1]{\mathbb{#1}}
\def\x{\boldsymbol{x}}
\def\z{\boldsymbol{z}}
\def\r{\boldsymbol{r}}
\def\w{\boldsymbol{w}}
\def\v{\boldsymbol{v}}
\newcommand{\lab}{\hat{k}}
\def\u{\boldsymbol{u}}
\def\S{\mathcal{S}}
\def\Sbb{\mathbb{S}}
\def\P{\mathbf{P}}
\def\Pbb{\mathbb{P}}
\def\Ebb{\mathbb{E}}
\newtheorem*{rep@theorem}{\rep@title}
\newcommand{\newreptheorem}[2]{%
\newenvironment{rep#1}[1]{%
 \def\rep@title{#2 \ref{##1}}%
 \begin{rep@theorem}}%
 {\end{rep@theorem}}}
\newcommand{\fe}{\hat{k} (\x+\v) \neq \hat{k} (\x) \text{ or } \hat{k}(\x-\v) \neq \hat{k} (\x)}
\DeclarePairedDelimiterX{\infdivx}[2]{}{}{%
  #1\;\delimsize\|\;#2%
}
\title{Analysis of universal adversarial perturbations}
\author{
\IEEEauthorblockN{Seyed-Mohsen Moosavi-Dezfooli\IEEEauthorrefmark{1}\IEEEauthorrefmark{2}\thanks{\IEEEauthorrefmark{1}The first two authors contributed equally to this work.}\;\,\thanks{\IEEEauthorrefmark{2}LTS4, \'Ecole Polytechnique F\'ed\'erale de Lausanne, Switzerland}}
\IEEEauthorblockA{\texttt{seyed.moosavi@epfl.ch}}
\\
\IEEEauthorblockN{Alhussein Fawzi\IEEEauthorrefmark{1}\IEEEauthorrefmark{3}\;\,\thanks{\IEEEauthorrefmark{3}UCLA Vision Lab, University of California, Los Angeles, CA 90095}}
  \IEEEauthorblockA{\texttt{fawzi@cs.ucla.edu}}
  \\
  \IEEEauthorblockN{Omar Fawzi\IEEEauthorrefmark{4}\thanks{\IEEEauthorrefmark{4}ENS de Lyon, LIP, UMR 5668 ENS Lyon - CNRS - UCBL - INRIA, Universit\'e de Lyon, France}}
  \IEEEauthorblockA{\texttt{omar.fawzi@ens-lyon.fr}}
  \\
  \IEEEauthorblockN{Pascal Frossard\IEEEauthorrefmark{2}}
  \IEEEauthorblockA{\texttt{pascal.frossard@epfl.ch}}
  \\
  \IEEEauthorblockN{Stefano Soatto\IEEEauthorrefmark{3}}
  \IEEEauthorblockA{\texttt{soatto@cs.ucla.edu}}
}
\begin{document}

\maketitle

\begin{abstract}
Deep networks have recently been shown to be vulnerable to universal perturbations: there exist very small \textit{image-agnostic} perturbations that cause most natural images to be misclassified by such classifiers. In this paper, we propose a  quantitative analysis of the robustness of classifiers to universal perturbations, and draw a formal link between the robustness to universal perturbations, and the geometry of the decision boundary.  Specifically, we establish theoretical bounds on the robustness of classifiers under two decision boundary models (\textit{flat} and \textit{curved} models). We show in particular that the robustness of deep networks to universal perturbations is driven by a key property of their curvature: there exists shared directions along which the decision boundary of deep networks is systematically positively curved. Under such conditions, we prove the existence of small universal perturbations. Our analysis further provides a novel geometric method for computing universal perturbations, in addition to explaining their properties.
\end{abstract}

\section{Introduction}
Despite the recent success of deep neural networks in solving complex visual tasks \cite{krizhevsky2012imagenet}, these classifiers have recently been shown to be highly vulnerable to perturbations in the input space. In \cite{moosavi2017universal}, state-of-the-art deep networks are empirically shown to be vulnerable to \textit{universal} perturbations: there exist very small image-agnostic perturbations that cause most natural images to be misclassified. These perturbations fundamentally differ from the random noise regime, and exploit essential properties of the deep network to misclassify most natural images with perturbations of very small magnitude. Why are state-of-the-art classifiers highly vulnerable to these specific directions in the input space? What do these directions represent? To answer these questions, we follow a theoretical approach and find the causes of this vulnerability in the geometry of the decision boundaries induced by deep neural networks. For deep networks, we show that the key to answering these questions lies in the existence of \textit{shared directions} (across different datapoints) along which the decision boundary is highly positively curved. This establishes fundamental connections between geometry and robustness to universal perturbations, and thereby reveal unknown properties of the decision boundaries induced by deep networks.





While \cite{moosavi2017universal} studies the vulnerability to universal perturbations of deep neural networks, our aim here is to derive a more generic analysis in terms of the geometric properties of the boundary. To this end,
we introduce two decision boundary models: 1) the \textit{locally flat} model assumes that the first order linear approximation of the decision boundary holds locally in the vicinity of the natural images, and 2) the \textit{locally curved} model provides a second order description of the decision boundary, and takes into account the \textit{curvature} information. We summarize our contributions as follows:
\begin{itemize}
\item Under the \textit{locally flat} decision boundary model, we show that classifiers are vulnerable to universal directions so long as the normals to the decision boundaries in the vicinity of natural images are \textit{correlated} (i.e., approximately span a low dimensional space). This result formalizes and proves the empirical observations made in \cite{moosavi2017universal}.
\item Under the \textit{locally curved} decision boundary model, the robustness to universal perturbations is instead driven by the \textit{curvature} of the decision boundary; we show that the existence of \textit{shared} directions along which the decision boundary is positively curved implies the existence of very small universal perturbations. 
\item We show that state-of-the-art deep nets satisfy the assumption of our theorem derived for the locally curved model: there exist shared directions along which the decision boundary of deep neural networks are positively curved. Our theoretical result consequently captures the large vulnerability of state-of-the-art deep networks to universal perturbations. 
\item We finally show that the developed theoretical framework provides a novel (geometric) method for computing universal perturbations, and further explains some of the properties observed in \cite{moosavi2017universal} (e.g., diversity, transferability).
\end{itemize}
The robustness of classifiers to noise has recently gained a lot of attention.  Several works have focused on the analysis of the robustness properties of SVM classifiers (e.g., \cite{xu2009robustness}) and new approaches for constructing robust classifiers (based on robust optimization) have been proposed \cite{sra2012optimization, lanckriet2003robust}. More recently, several works have assessed the robustness of deep neural networks to different regimes such as adversarial perturbations \cite{szegedy2013intriguing}, random noise \cite{nips2016_ours}, and realistic occlusions \cite{sharif2016accessorize}. The robustness of classifiers to adversarial perturbations has been specifically studied, leading to theoretical as well empirical explanations of the phenomenon \cite{fawzi2015analysis, goodfellow2014, tabacof2015exploring, tanay2016boundary}. Although related, adversarial perturbations  are fundamentally different from universal perturbations, as the latter is a single perturbation that is added to all images, while the former is adapted to each image (and adversarial perturbations are not universal as shown in \cite{moosavi2017universal}). While several works attribute the high vulnerability to perturbations of deep nets to their low flexibility \cite{fawzi2015analysis}, high linearity \cite{goodfellow2014}, or flat decision boundaries \cite{nips2016_ours}, we show here that it is, on the contrary, the large curvature of the decision boundary that causes the vulnerability to universal perturbations. Our bounds indeed show an increasing vulnerability with respect to the curvature of the decision boundary, and represent the first formal result showing the tight links between robustness and large curvature. Finally, it should be noted that few works have recently studied properties of deep networks from a geometric perspective (such as their expressivity \cite{poole2016exponential, montufar2014number}); our focus is however different as we analyze the robustness with the geometry of the decision boundary.

\vspace{-2mm}

\section{Definitions and notations}
\label{sec:definitions}

Consider an $L$-class classifier $f:\mathbb{R}^d\rightarrow\mathbb{R}^L$. Given a datapoint $\x_0 \in \mathbb{R}^d$, we define the estimated label $\lab(\x_0)=\argmax_k f_k(\x_0)$,
where $f_k(\x)$ is the $k$th component of $f(\x)$ that corresponds to the $k^{\text{th}}$ class.
We define by $\mu$ a distribution over natural images in $\bb{R}^d$. The main focus of this paper is to analyze the robustness of classifiers to \textit{universal} (image-agnostic) noise. Specifically, we define $\v$ to be a \textit{universal} noise vector if $\hat{k} (\x+\v) \neq \hat{k} (\x) \text{ for ``most'' } \x \sim \mu.$ Formally, a perturbation $\v$ is $(\xi, \delta)$-universal, if the following two constraints are satisfied:
\begin{align*}
\| \v \|_2 & \leq \xi, \\
\Pbb\left( \hat{k} (\x+\v) \neq \hat{k} (\x) \right) & \geq 1 - \delta.
\end{align*}
This perturbation image $\v$ is coined ``universal'', as it represents a fixed image-agnostic perturbation that causes label change for a large fraction of images sampled from the data distribution $\mu$.
In \cite{moosavi2017universal}, state-of-the-art classifiers have been shown to be surprisingly vulnerable to this simple perturbation regime. 

It should be noted that universal perturbations are different from \textit{adversarial perturbations} \cite{szegedy2013intriguing}, which are datapoint specific perturbations that are sought to fool a \textit{specific} image. An adversarial perturbation is specifically defined as the solution of the following optimization problem
\begin{align}
\label{eq:adv_pert}
\r(\x) = \arg\min_{\r \in \mathbb{R}^d} \| \r \|_2 \text{ subject to } \hat{k} (\x+\r) \neq \hat{k} (\x),
\end{align}
which corresponds to the smallest additive perturbation that is necessary to change the label of the classifier $\hat{k}$ for $\x$. From a geometric perspective, $\r(\x)$ quantifies the distance from $\x$ to the decision boundary (see Fig. \ref{fig:adversarial_geometry}). In addition, due to the optimality conditions of Eq. (\ref{eq:adv_pert}), $\r(\x)$ is orthogonal to the decision boundary at $\x+\r(\x)$, as illustrated in Fig. \ref{fig:adversarial_geometry}.


In the remainder of the paper, we analyze the robustness of classifiers to universal noise, with respect to the geometry of the \textit{decision boundary} of the classifier $f$. Formally, the pairwise decision boundary, when restricting the classifier to class $i$ and $j$ is defined by $\mathscr{B} = \{ \z \in \bb{R}^d: f_{i} (\z) - f_{j} (\z) = 0\}$ (we omit the dependence of $\mathscr{B}$ on $i,j$ for simplicity). The decision boundary of the classifier hence corresponds to points in the input space that are equally likely to be classified as $i$ or $j$. 

In the following sections, we introduce two models on the decision boundary, and quantify in each case the robustness of such classifiers to universal perturbation. We then analyze the robustness of deep networks in light of the two models. 

\begin{figure}
  \centering
  \begin{subfigure}[b]{0.49\textwidth}
    \center
    \includegraphics[width=0.6\textwidth]{./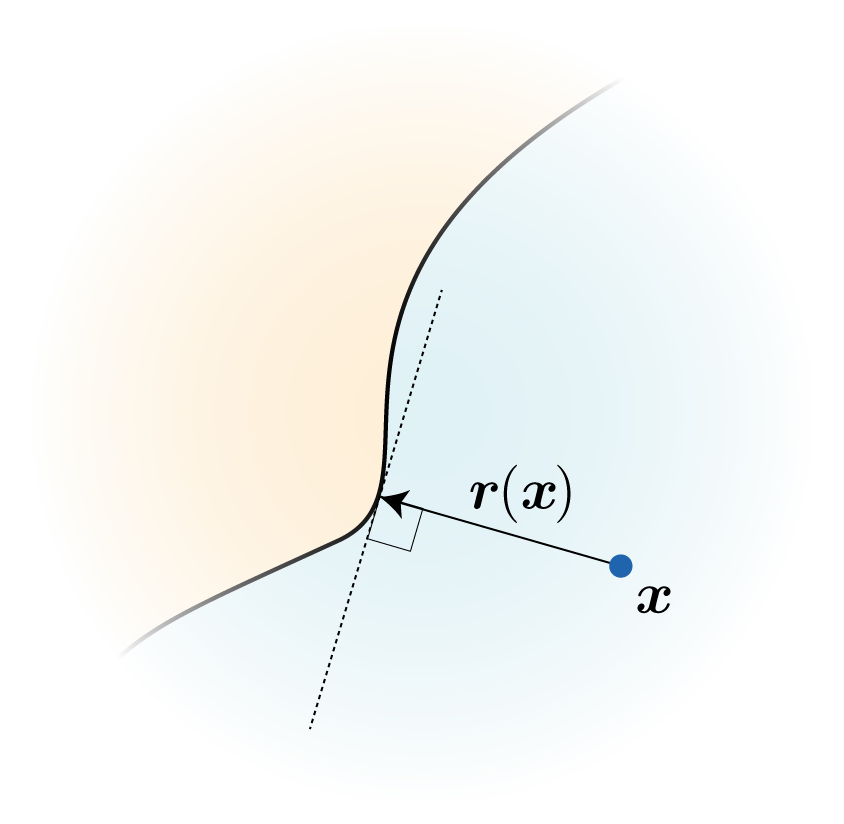}
    \caption{Local geometry of the decision boundary.}
    \label{fig:adversarial_geometry}
  \end{subfigure}
  ~
  \begin{subfigure}[b]{0.49\textwidth}
    \center
    \includegraphics[width=0.7\textwidth]{./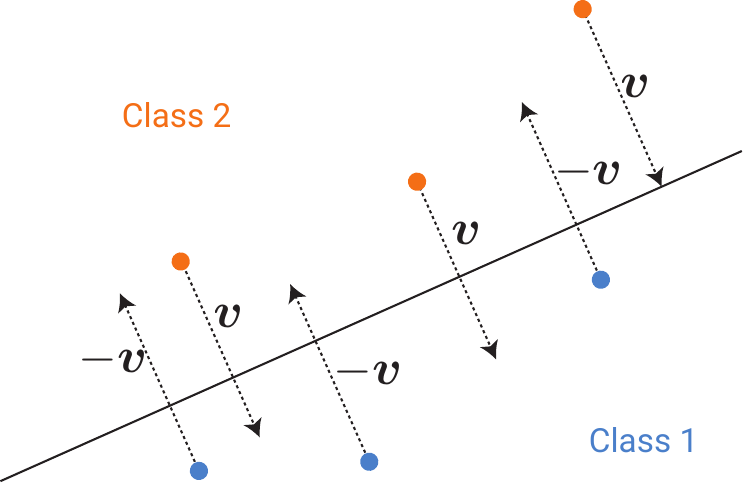}
    \caption{Universal direction $\v$ of a linear binary classifier.}
    \label{fig:binary_classifier}
  \end{subfigure}
  \caption{}
\end{figure}

\section{Robustness of classifiers with flat decision boundaries}
Several recent works have attempted to explain the vulnerability of classifiers to different types of perturbations (e.g., random noise, and adversarial perturbations) by assuming a flat decision boundary model. We start here our analysis with a similar hypothesis, and analyze the robustness of classifiers to universal perturbations under this decision boundary model. 
\label{sec:locally_flat_model}
We specifically study the existence of a universal direction $\v$, such that
\begin{align}
\label{eq:fooling_condition}
\hat{k} (\x+\v) \neq \hat{k} (\x) \text{ or } \hat{k} (\x-\v) \neq \hat{k} (\x),
\end{align}
where $\v$ is a vector of sufficiently small norm. It should be noted that a universal \textit{direction} (as opposed to a universal vector) is sought in Eq. (\ref{eq:fooling_condition}), as this definition is more adapted to the analysis of \textit{linear} classifiers. For example, while a \textit{binary} linear classifier has a universal direction that fools all the data points, only half of the data points can be fooled with a universal vector (provided the classes are balanced) (see Fig.~\ref{fig:binary_classifier}). We therefore consider this slightly modified definition in the remainder of this section. 

We start our analysis by introducing our local decision boundary model. For $\x \in \Rbb^d$, note that $\x+\r(\x)$ belongs to the decision boundary and $\r(\x)$ is normal to the decision boundary at $\x + \r(\x)$ (see Fig. \ref{fig:adversarial_geometry}). A linear approximation of the decision boundary of the classifier at $\x+\r(\x)$ is therefore given by $\x + \{ \v: \r(\x)^T \v = \| \r(\x) \|_2^2 \}$. 
Under this approximation, the vector $\r(\x)$ hence captures the local geometry of the decision boundary in the vicinity of datapoint $\x$. We assume a local decision boundary model in the vicinity of datapoints $\x \sim \mu$, where the local classification region of $\x$ occurs in the halfspace $\r(\x)^T \v \leq \| \r(\x) \|_2^2$. Equivalently, we assume that outside of this half-space, the classifier outputs a different label than $\hat{k} (\x)$. However,  since we are analyzing the robustness to universal \textit{directions} (and not vectors), we consider the following condition, given by
\begin{align}
\mathscr{L}_{s}(\x, \rho): \forall \v \in B(\rho), | \r(\x)^T \v | \geq \| \r(\x) \|_2^2 \implies \hat{k} (\x+\v) \neq \hat{k} (\x) \text{ or } \hat{k} (\x-\v) \neq \hat{k} (\x).
\end{align}
where $B(\rho)$ is a ball of radius $\rho$ centered at $\mathbf{0}$. An illustration of this decision boundary model is provided in Fig. \ref{fig:linear_model}. It should be noted that linear classifiers satisfy this decision boundary model, as their decision boundaries are globally flat. This \textit{local} decision boundary model is however more general, as we do \textit{not} assume that the decision boundary is linear, but rather that the classification region in the vicinity of $\x$ is included in $\x + \{ \v: | \r(\x)^T \v | \leq \| \r(\x) \|_2^2\}$. Fig. \ref{fig:linear_model} provides an example of nonlinear decision boundary that satisfies this model.

In all the theoretical results of this paper, we assume that $\| \r(\x) \|_2 = 1$,  for all $\x \sim \mu$, for simplicity of the exposition. The results can be extended in a straightforward way to the case where $\| \r(\x) \|_2$ takes different values for points sampled from $\mu$. The following result shows that classifiers following the flat decision boundary model are \textit{not} robust to small universal perturbations, provided the normals to the decision boundary (in the vicinity of datapoints)  approximately belong to a low dimensional subspace of dimension $m\ll d$.

\begin{theorem}
\label{thm:theorem_S}
Let $\xi \geq 0, \delta \geq 0$. Let $\mathcal{S}$ be an $m$ dimensional subspace such that$\| P_{\mathcal{S}} \r(\x) \|_2 \geq 1 - \xi \text{ for almost all } \x \sim \mu,$, where $P_{\mathcal{S}}$ is the projection operator on the subspace. Assume moreover that $\mathscr{L}_s \left(\x, \rho\right)$ holds for almost all $\x \sim \mu$, with $\rho = \frac{\sqrt{e m}}{\delta(1-\xi)}$. Then, there exists a universal noise vector $\v$, such that $\| \v \|_2 \leq \rho$ and $\pr{x\sim\mu}{\fe} \geq 1 - \delta.$
\end{theorem}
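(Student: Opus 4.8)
\emph{Proof plan.} The plan is to argue by the probabilistic method: exhibit a \emph{random} perturbation $\v$ concentrated on the subspace $\mathcal{S}$, bound the expected fraction of data it fails to fool, and conclude that some deterministic realization of $\v$ fools at least a $(1-\delta)$-fraction of $\mu$. I assume the non-degenerate regime $\xi\in[0,1)$, $\delta>0$ (otherwise $\rho$ is not a positive finite number and there is nothing to prove). Take $\v:=\rho\,\boldsymbol{g}$ where $\boldsymbol{g}$ is uniform on the unit sphere of $\mathcal{S}$, viewed for now as a random vector; then $\|\v\|_2=\rho$ deterministically, so the norm constraint is automatic. Since $\boldsymbol{g}\in\mathcal{S}$ and $P_{\mathcal{S}}$ is the orthogonal projection, $\r(\x)^T\v=\rho\,(P_{\mathcal{S}}\r(\x))^T\boldsymbol{g}=\rho\,\|P_{\mathcal{S}}\r(\x)\|_2\,\hat{\r}(\x)^T\boldsymbol{g}$, where $\hat{\r}(\x):=P_{\mathcal{S}}\r(\x)/\|P_{\mathcal{S}}\r(\x)\|_2$ is a fixed unit vector in $\mathcal{S}$ (well defined for a.e.\ $\x$, since $\|P_{\mathcal{S}}\r(\x)\|_2\ge 1-\xi>0$). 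Using $\|P_{\mathcal{S}}\r(\x)\|_2\ge 1-\xi$ and $\|\r(\x)\|_2=1$, we get $|\r(\x)^T\v|\ge\rho(1-\xi)\,|\hat{\r}(\x)^T\boldsymbol{g}|$ for a.e.\ $\x$, which reduces everything to controlling the scalar $\hat{\r}(\x)^T\boldsymbol{g}$.

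The one fact that does the work is an anti-concentration (small-ball) estimate: for any fixed unit vector $\boldsymbol{u}\in\mathcal{S}$ and any $t>0$,
\[
\Pbb_{\boldsymbol{g}}\!\left(|\boldsymbol{u}^T\boldsymbol{g}|\le t\right)\;\le\;\sqrt{em}\;t .
\]
Indeed, by rotational symmetry $\boldsymbol{u}^T\boldsymbol{g}$ has the law of the first coordinate of a uniform point on $S^{m-1}$, with density $\frac{\Gamma(m/2)}{\sqrt{\pi}\,\Gamma((m-1)/2)}(1-s^2)^{(m-3)/2}$ on $[-1,1]$; for $m\ge 3$ this is at most $\frac{\Gamma(m/2)}{\sqrt{\pi}\,\Gamma((m-1)/2)}\le\tfrac12\sqrt{em}$ (using $\Gamma(x+\tfrac12)/\Gamma(x)\le\sqrt{x}$ with $x=(m-1)/2$, together with $\tfrac{2}{\pi}\le e$), the cases $m=1,2$ being handled directly. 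Now choose $t=\dfrac{1}{\rho(1-\xi)}=\dfrac{\delta}{\sqrt{em}}$ --- precisely the value making $\rho(1-\xi)t=1$ --- so that, combining with the reduction above, for a.e.\ $\x$,
\[
\Pbb_{\boldsymbol{g}}\!\left(|\r(\x)^T\v|<1\right)\;\le\;\Pbb_{\boldsymbol{g}}\!\left(|\hat{\r}(\x)^T\boldsymbol{g}|<\tfrac{\delta}{\sqrt{em}}\right)\;\le\;\sqrt{em}\cdot\tfrac{\delta}{\sqrt{em}}\;=\;\delta .
\]

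Next, take the expectation over $\x\sim\mu$ and interchange the order of integration (Tonelli; the integrand is nonnegative):
\[
\Ebb_{\boldsymbol{g}}\!\left[\Pbb_{\x\sim\mu}\!\left(|\r(\x)^T\v|<1\right)\right]\;=\;\Ebb_{\x\sim\mu}\!\left[\Pbb_{\boldsymbol{g}}\!\left(|\r(\x)^T\v|<1\right)\right]\;\le\;\delta .
\]
Hence there is a deterministic realization $\boldsymbol{g}_0$ of $\boldsymbol{g}$ (in fact a positive-measure set of them) for which the fixed vector $\v:=\rho\,\boldsymbol{g}_0$ obeys $\Pbb_{\x\sim\mu}(|\r(\x)^T\v|<1)\le\delta$, equivalently $\Pbb_{\x\sim\mu}(|\r(\x)^T\v|\ge 1)\ge 1-\delta$. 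Finally, for any $\x$ with $|\r(\x)^T\v|\ge 1=\|\r(\x)\|_2^2$ for which $\mathscr{L}_s(\x,\rho)$ holds, the fact that $\v\in B(\rho)$ lets us apply the implication defining $\mathscr{L}_s(\x,\rho)$ to conclude $\hat{k}(\x+\v)\neq\hat{k}(\x)$ or $\hat{k}(\x-\v)\neq\hat{k}(\x)$; since $\mathscr{L}_s(\x,\rho)$ holds for a.e.\ $\x$, discarding that null set does not affect probabilities, so $\pr{x\sim\mu}{\fe}\ge\Pbb_{\x\sim\mu}(|\r(\x)^T\v|\ge 1)\ge 1-\delta$ with $\|\v\|_2=\rho$, as required.

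The only non-routine step --- and hence the main obstacle --- is the anti-concentration bound $\Pbb_{\boldsymbol{g}}(|\boldsymbol{u}^T\boldsymbol{g}|\le t)\le\sqrt{em}\,t$: showing that a uniformly random direction in an $m$-dimensional subspace is rarely nearly orthogonal to a prescribed vector. It rests on the explicit marginal density of the uniform sphere measure together with a Gamma-ratio estimate, plus a trivial check for $m\le 2$. Everything else --- the linear reduction to $\hat{\r}(\x)^T\boldsymbol{g}$, the Tonelli interchange, the averaging argument producing a deterministic $\v$, and the appeal to $\mathscr{L}_s$ --- is routine, as is handling the degenerate regimes $\xi\to 1$, $\delta\to 0$, and small $m$.
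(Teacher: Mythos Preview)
Your proof is correct and follows essentially the same strategy as the paper: sample $\v$ uniformly from the sphere of radius $\rho$ in $\mathcal{S}$, reduce via $P_{\mathcal{S}}$ to an anti-concentration statement for $|\hat{\r}(\x)^T\boldsymbol{g}|$, exchange expectations by Tonelli, and extract a deterministic $\v$ by averaging. The only difference is the justification of the anti-concentration bound $\Pbb(|\boldsymbol{u}^T\boldsymbol{g}|\le \delta/\sqrt{em})\le\delta$: the paper invokes a Johnson--Lindenstrauss--type lemma (their Lemma~1, applied in dimension $m$ with a one-dimensional projection, using $\beta_1(\delta,1)\ge e^{-1}\delta^2$), whereas you derive it directly from the marginal density of a spherical coordinate and a Gamma-ratio estimate --- a slightly more self-contained but equivalent argument.
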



\vspace{-2mm}

The proof can be found in the appendix, and relies on the construction of a universal perturbation through randomly sampling from $\mathcal{S}$.  The vulnerability of classifiers to universal perturbations can be attributed to the \textit{shared} geometric properties of the classifier's decision boundary in the vicinity of different data points. In the above theorem, this shared geometric property across different data points is expressed in terms of the normal vectors $\r(\x)$ in the neighborhood of the decision boundary. The main assumption of the above theorem is specifically that normal vectors $\r(\x)$ in the neighborhood of the decision boundary approximately live in a subspace $\mathcal{S}$ of low dimension $m<d$. Under this assumption, 
the above result shows the existence of universal perturbations of $\ell_2$ norm of order $\sqrt{m}$. When $m \ll d$, Theorem \ref{thm:theorem_S} hence shows that very small (compared to random noise, which scales as $\sqrt{d}$ \cite{nips2016_ours}) universal perturbations misclassifying most data points can be found. 

\textbf{Remark 1.} Theorem \ref{thm:theorem_S} can be readily applied to assess the robustness of multiclass linear classifiers to universal perturbations. In fact, when $f(\x)=W^T\x$, with $W = [\w_1, \dots, \w_L]$, the normal vectors are equal to $\w_i - \w_j$, for $1 \leq i, j \leq L, i \neq j$. These normal vectors exactly span a  subspace of dimension $L-1$. Hence, by applying the result with $\xi = 0$, and $m = L-1$, we obtain that linear classifiers are vulnerable to universal noise, with magnitude proportional to $\sqrt{L-1}$. In typical problems, we have $L \ll d$, which leads to very small universal directions. 

\textbf{Remark 2.} Theorem \ref{thm:theorem_S} formalizes the empirical observations made in \cite{moosavi2017universal} and therefore provides a partial expalantion to the vulnerability of deep networks, provided a locally flat decision boundary model is assumed (which is the assumption in e.g., \cite{nips2016_ours}). In fact, normal vectors in the vicinity of the decision boundary of deep nets have been observed to approximately span a subspace $\mathcal{S}$. However, unlike linear classifiers, the dimensionality of this subspace $m$ is typically larger than the the number of classes $L$, leading to large upper bounds on the norm of the universal noise, under the flat decision boundary model. This simplified model of the decision boundary hence fails to exhaustively explain the large vulnerability of state-of-the-art deep neural networks to universal perturbations. 

We show in the next section that the second order information of the decision boundary contains crucial information (\textit{curvature}) that captures the high vulnerability to universal perturbations.

\begin{figure}[ht]
  \center
  \begin{subfigure}{0.49\textwidth}
    \center
  \includegraphics[width=0.6\textwidth]{./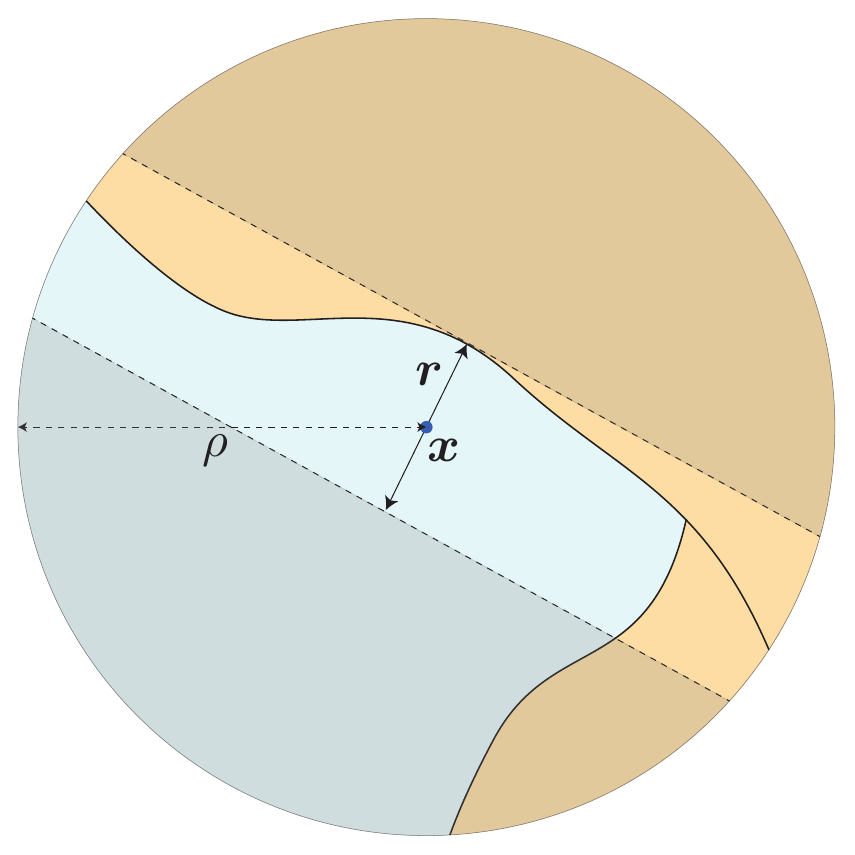}
  \caption{Flat decision boundary model $\mathscr{L}_s (\x, \rho)$.}
  \label{fig:linear_model}
  \end{subfigure}
  ~
  \begin{subfigure}{0.49\textwidth}
    \center
  \includegraphics[width=0.6\textwidth]{./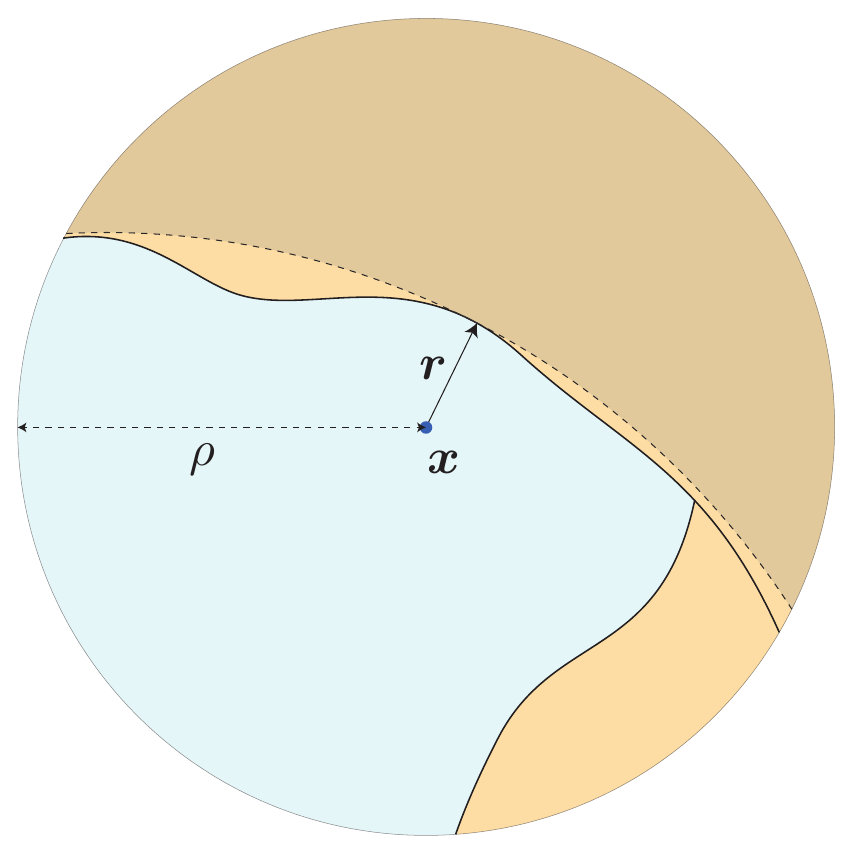}
  \caption{Curved decision boundary model $\mathscr{Q} (\x, \rho)$.}
  \label{fig:quadratic_model}
  \end{subfigure}
  \caption{\label{fig:linear_quadratic} Illustration of the decision boundary models considered in this paper. (a): For the flat decision boundary model, the set  $\{ \v: |\r(\x)^T \v| \leq \| \r(\x) \|_2^2 \}$ is illustrated (stripe). Note that for $\v$ taken outside the stripe (i.e., in the grayed area), we have $\hat{k}(\x+\v) \neq \hat{k} (\x)$ or $\hat{k}(\x-\v) \neq \hat{k} (\x)$ in the $\rho$ neighborhood. (b): For the curved decision boundary model, the any vector $\v$ chosen in the grayed area is classified differently from $\hat{k} (\x)$.}
\end{figure}

\vspace{-1mm}

\section{Robustness of classifiers with curved decision boundaries}

\label{sec:curved}

We now consider a model of the decision boundary in the vicinity of the data points that allows to leverage the \textit{curvature} of nonlinear classifiers.  Under this decision boundary model, we study the existence of universal perturbations satisfying $\hat{k} (\x+\v) \neq \hat{k} (\x)$ for most $\x \sim \mu$.\footnote{Unlike for linear classifiers, we now consider the problem of finding a universal \textit{vector} (as opposed to universal \textit{direction}) that fools most of the data points. This corresponds to the notion of universal perturbations first highlighted in \cite{moosavi2017universal}.}

\begin{figure}[ht]
\centering
  \begin{subfigure}{0.3\textwidth}
    \center
	\includegraphics[width=0.8\textwidth,page=2]{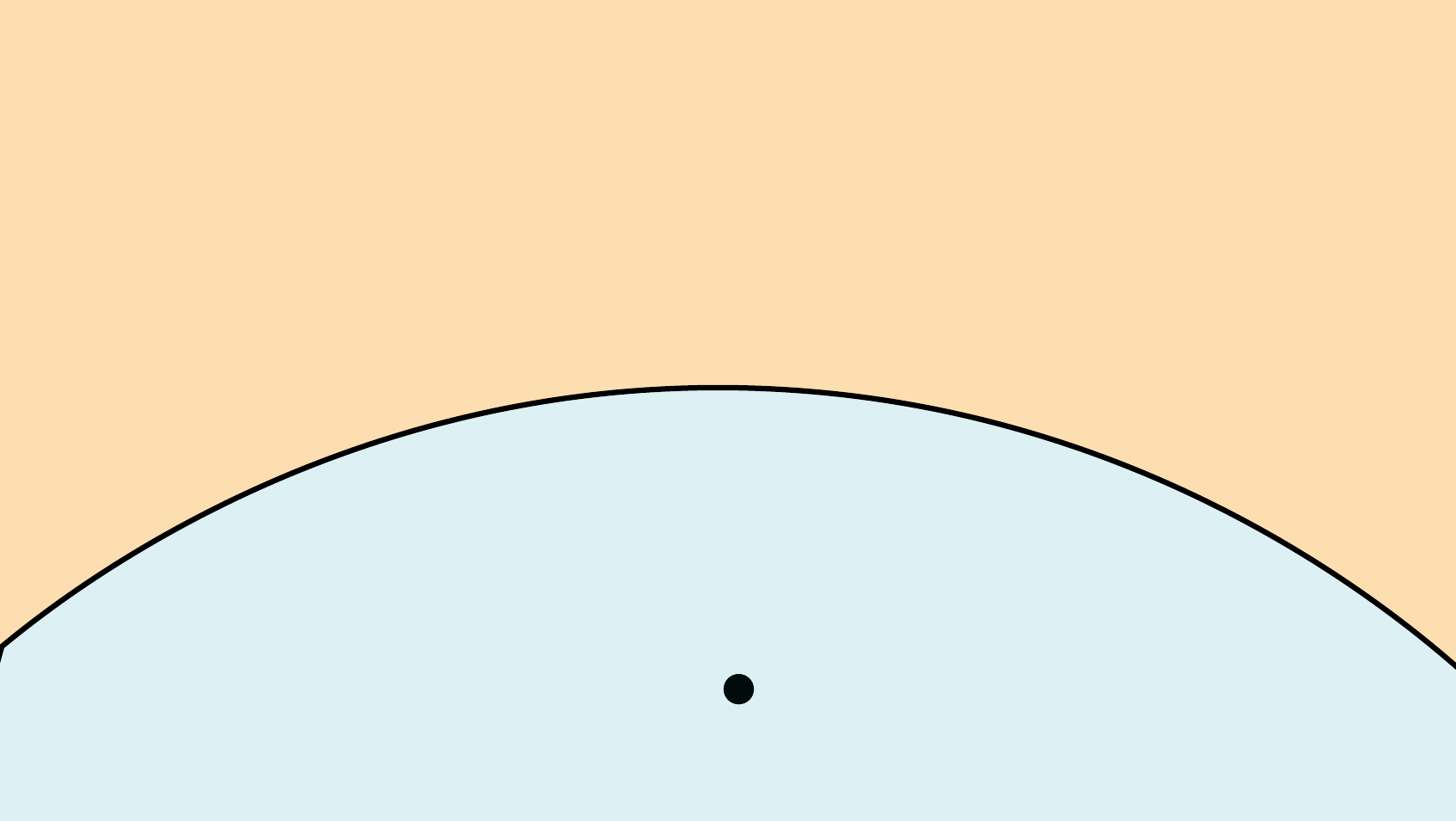}
 	\end{subfigure}
\begin{subfigure}{0.3\textwidth}
  \center
	\includegraphics[width=0.8\textwidth,page=4]{concave_shape_robust.pdf}
 	\end{subfigure}
 	\begin{subfigure}{0.3\textwidth}
    \center
	\includegraphics[width=0.8\textwidth,page=6]{concave_shape_robust.pdf}
 	\end{subfigure}
 	\caption{\label{fig:link_robustness_curvature} Link between robustness and curvature of the decision boundary. When the decision boundary is \textit{positively} curved (left), small universal perturbations are more likely to fool the classifier.}
\end{figure}

We start by establishing an informal link between \textit{curvature} of the decision boundary and \textit{robustness to universal perturbations}, that will be made clear later in this section. As illustrated in Fig. \ref{fig:link_robustness_curvature}, the norm of the required perturbation to change the label of the classifier along a specific direction $\v$ is smaller if the decision boundary is positively curved, than if the decision boundary is flat (or with negative curvature). It therefore appears from Fig. \ref{fig:link_robustness_curvature} that the existence of universal perturbations (when the decision boundary is curved) can be attributed to the existence of \textit{common} directions where the decision boundary is positively curved for many data points. In the remaining of this section, we formally prove the existence of universal perturbations, when there exists \textit{common} positively curved directions of the decision boundary. 

Recalling the definitions of Sec. \ref{sec:definitions}, a quadratic approximation of the decision boundary at $\z = \x+\r(\x)$ gives
$\x + \{ \v: (\v-\r(\x))^T H_{\z} (\v-\r(\x)) + \alpha_{x} \r(\x)^T (\v-\r(\x)) = 0 \}$,
where $H_{\z}$ denotes the Hessian of $F$ at $\z$, and $\alpha_x = \frac{\| \nabla F(\z) \|_2}{\| \r(\x) \|_2}$, with $F = f_i - f_j$. In this model, the second order information (encoded in the Hessian matrix $H_{\z}$) captures the curvature of the decision boundary. We assume a \textit{local} decision boundary model in the vicinity of datapoints $\x \sim \mu$, where the local classification region of $\x$ is bounded by a quadratic form. Formally, we assume that there exists $\rho > 0$ where the following condition holds for almost all $\x \sim \mu$:
\begin{align*}
\mathscr{Q}(\x, \rho): \forall \v \in B(\rho), (\v-\r(\x))^T H_{\z} (\v-\r(\x)) + \alpha_x \r(\x)^T(\v-\r(\x)) \leq 0 \implies \hat{k} (\x+\v) \neq \hat{k} (\x).
\end{align*}

An illustration of this quadratic decision boundary model is shown in Fig. \ref{fig:quadratic_model}.
The following result shows the existence of universal perturbations, provided a subspace $\mathcal{S}$ exists where the decision boundary has \textit{positive} curvature along most directions of $\mathcal{S}$: 

\begin{theorem}
\label{thm:curvature}
Let $\kappa > 0, \delta > 0$ and $m \in \bb{N}$. Assume that the quadratic decision boundary model $\mathscr{Q} \left(\x, \rho \right)$ holds for almost all $\x \sim \mu$, with
$\rho = \sqrt{\frac{2 \log(2/\delta)}{m}} \kappa^{-1} + \kappa^{-1/2}$.
Let $\S$ be a $m$ dimensional subspace such that
$$\pr{\v \sim \mathbb{S}}{\forall \u \in \Rbb^2, \alpha_x^{-1} \u^T H_{\z}^{\r(\x), \v} \u \geq \kappa \| \u \|_2^2} \geq 1 - \beta \text{ for almost all } \x \sim \mu,$$
where $H_{\z}^{\r(\x), \v} = \Pi^T H_{\z} \Pi$ with $\Pi$ an orthonormal basis of $\text{span} (\r(\x), \v)$, and $\Sbb$ denotes the unit sphere in $\mathcal{S}$.
Then, there is a universal perturbation vector $\v$ such that $\| \v \|_2 \leq \rho$ and $\pr{\x \sim \mu}{\hat{k}(\x+\v) \neq \hat{k}(\x)} \geq 1 - \delta - \beta$.
\end{theorem}
\begin{figure}[h]
  \center
  \begin{subfigure}{0.39\textwidth}
  \includegraphics[width=1.0\textwidth]{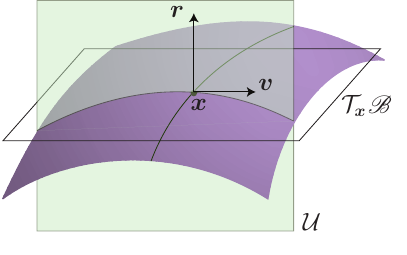}
\end{subfigure}
\hfill
  \begin{subfigure}{0.59\textwidth}
    \center
  \includegraphics[width=0.6\textwidth]{./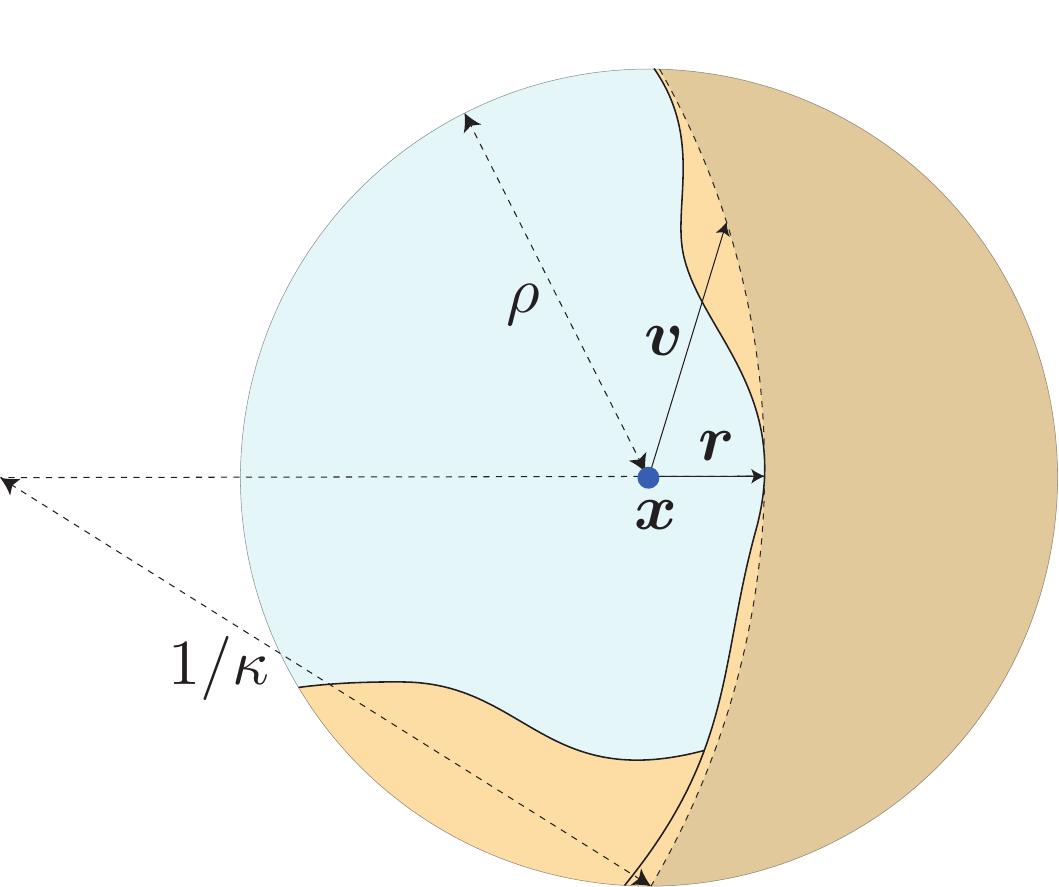}
\end{subfigure}
  \caption{\label{fig:normal_geometric}\textbf{Left:} Normal section $\mathcal{U}$ of the decision boundary, along the plane spanned by the normal vector $\r(\x)$ and $\v$. \textbf{Right:} Geometric interpretation of the assumption in Theorem \ref{thm:curvature}. Theorem \ref{thm:curvature} assumes that the decision boundary along normal sections $(\r(\x), \v)$ is locally (in a $\rho$ neighborhood) located \textit{inside} a disk of radius $1/\kappa$. Note the difference with respect to traditional notions of curvature, which express the curvature in terms of the osculating circle at $\x+\r(\x)$. The assumption we use here is more ``global''.}
  \label{fig:normal_section}
\end{figure}

The above theorem quantifies the robustness of classifiers to universal perturbations in terms of the curvature $\kappa$ of the decision boundary, along normal sections spanned by $\r(\x)$, and vectors $\v \in \mathcal{S}$ (see Fig. \ref{fig:normal_geometric} (\textit{left}) for an illustration of a normal section). 
Fig. \ref{fig:normal_geometric} (\textit{right}) provides a geometric illustration of the assumption of Theorem \ref{thm:curvature}. Provided a subspace $\mathcal{S}$ exists where the curvature of the decision boundary in the vicinity of datapoints $\x$ is \textit{positive}, Theorem \ref{thm:curvature} shows that universal perturbations can be found with a norm of approximately $\frac{\kappa^{-1}}{\sqrt{m}} + \kappa^{-1/2}$. Hence, when the curvature $\kappa$ is sufficiently large, the existence of small universal perturbations is guaranteed with Theorem \ref{thm:curvature}.\footnote{Theorem \ref{thm:curvature} should not be seen as a generalization of Theorem \ref{thm:theorem_S}, as the models are distinct. In fact, while the latter shows the existence of universal \textit{directions}, the former bounds the existence of universal \textit{perturbations}.}

\textbf{Remark 1.} We stress that Theorem \ref{thm:curvature} does \textit{not} assume that the decision boundary is curved in the direction of all vectors in $\mathbb{R}^d$, but we rather assume the existence of a subspace $\mathcal{S}$ where the decision boundary is positively curved (in the vicinity of natural images $\x$) across most directions in $\mathcal{S}$. 
Moreover, it should be noted that, unlike Theorem \ref{thm:theorem_S}, where the normals to the decision boundary are assumed to belong to a low dimensional subspace, no assumption is imposed on the normal vectors. Instead, we assume the existence of a subspace $\mathcal{S}$ leading to positive curvature, for points on the decision boundary in the vicinity of natural images.

\textbf{Remark 2.} Theorem \ref{thm:curvature} does not only predict the vulnerability of classifiers, but it also provides a constructive way to find such universal perturbations. In fact, \textit{random vectors} sampled from the subspace $\mathcal{S}$ are predicted to be universal perturbations (see the appendix for more details). In Section \ref{sec:experiments}, we will show that this new construction works remarkably well for deep networks, as predicted by our analysis.

\vspace{-3mm}

\section{Experimental results: universal perturbations for deep nets}
\label{sec:experiments}
We first evaluate the validity of the assumption of Theorem \ref{thm:curvature} for deep neural networks, that is the existence of a low dimensional subspace where the decision boundary is positively curved along most directions sampled from the subspace.
To construct the subspace, we find the directions that lead to large positive curvature in the vicinity of a given set of training points $\{ \x_1, \dots, \x_n \}$. 
We recall that principal directions $\v_1, \dots, \v_{d-1}$ at a point $\z$ on the decision boundary correspond to the \textit{eigenvectors} (with nonzero eigenvalue) of the matrix $H^t_{\z}$, given by $H^t_{\z} = P H_{\z} P$, where $P$ denotes the projection operator on the tangent to the decision boundary at $\z$, and $H_{\z}$ denotes the Hessian of the decision boundary function evaluated at $\z$ \cite{lee2009manifolds}. Common directions with large average curvature at $\z_i = \x_i + \r(\x_i)$ 
(where $\r(\x_i)$ is the minimal perturbation defined in Eq. (\ref{eq:adv_pert})) hence correspond to the eigenvectors of the average Hessian matrix $\overline{H} = n^{-1} \sum_{i=1}^n H^t_{\z_i}$. We therefore set our subspace, $\mathcal{S}_c$, to be the span of the first $m$ eigenvectors of $\overline{H}$, and 
 show that the subspace constructed in this way satisfies the assumption of Theorem \ref{thm:curvature}, for a deep net (LeNet) trained on CIFAR-10 with $n = 100$. To determine whether the decision boundary is positively curved in most directions of $\mathcal{S}_c$ (for unseen datapoints from the validation set), we compute the average curvature across random directions in $\mathcal{S}_c$ for points on the decision boundary, i.e. $\z = \x + \r(\x)$; the average curvature is formally given by
\[
\overline{\kappa}_{\mathcal{S}} (\x) = \ex{\v \sim \mathbb{S}}{\frac{(P \v)^T H_{\x} (P \v)}{\| P \v \|_2^2}},
\]
where $\mathbb{S}$ denotes the unit sphere in $\mathcal{S}_c$. 
In Fig. \ref{fig:curvature_dimension} (a), the average of $\overline{\kappa}_{\mathcal{S}} (\x)$ across points sampled from the \textit{validation set} is shown (as well as the standard deviation) in function of the subspace dimension $m$. Observe that when the dimension of the subspace is sufficiently small, the average curvature is strongly oriented towards positive curvature, which empirically shows the existence of this subspace $\mathcal{S}_c$ where the decision boundary is positively curved for most data points in the validation set. This empirical evidence hence suggests that the assumption of Theorem \ref{thm:curvature} is satisfied, and that universal perturbations hence represent \textit{random vectors} sampled from this subspace $\mathcal{S}_c$. 

To show this strong link between the vulnerability of universal perturbations and the \textit{positive curvature} of the decision boundary, we now visualize normal sections of the decision boundary of deep networks trained on ImageNet (CaffeNet) and CIFAR-10 (LeNet) in the direction of their respective universal perturbations. Specifically, we visualize normal sections of the decision boundary in the plane $(\r(\x), \v)$, where $\v$ is a universal perturbation computed using the universal perturbations algorithm of \cite{moosavi2017universal}. The visualizations are shown in Fig. \ref{fig:all_visualizations}. Interestingly, the universal perturbations belong to highly positively curved directions of the decision boundary, despite the absence of any geometric constraint in the algorithm to compute universal perturbations. To fool most data points, universal perturbations hence naturally seek \textit{common directions} of the embedding space, where the decision boundary is positively curved. These directions lead to very small universal perturbations, as highlighted by our analysis in Theorem \ref{thm:curvature}. It should be noted that such \textit{highly curved} directions of the decision boundary are rare, as random normal sections are comparatively very flat (see Fig. \ref{fig:all_visualizations}, second row). This is due to the fact that most principal curvatures are approximately zero, for points sampled on the decision boundary in the vicinity of data points. 

\begin{figure}[t!]
\centering
\includegraphics[width=0.8\textwidth]{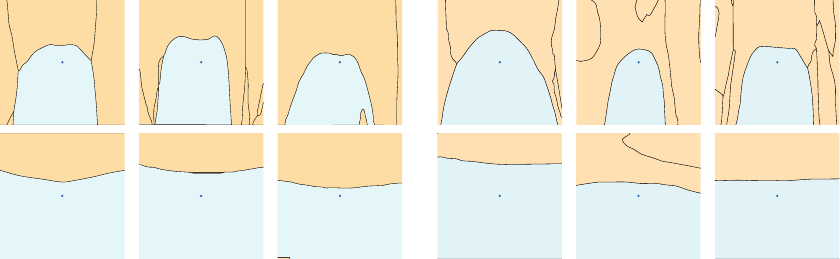}
\caption{\label{fig:all_visualizations} Visualization of normal cross-sections of the decision boundary, for ImageNet (CaffeNet, left) and CIFAR-10 (LeNet, right). 
\textbf{Top:} Normal cross-sections along $(\r(\x), \v)$, where $\v$ is the universal perturbation computed using the algorithm in \cite{moosavi2017universal}. \textbf{Bottom:} Normal cross-sections along $(\r(\x), \v)$, where $\v$ is a \textit{random} vector uniformly sampled from the unit sphere in $\mathbb{R}^d$.}
\label{fig:cross_sections}
\end{figure}

\begin{figure}[t!]
  \centering
  \begin{subfigure}[t]{0.40\textwidth}
    \includegraphics[width=\textwidth]{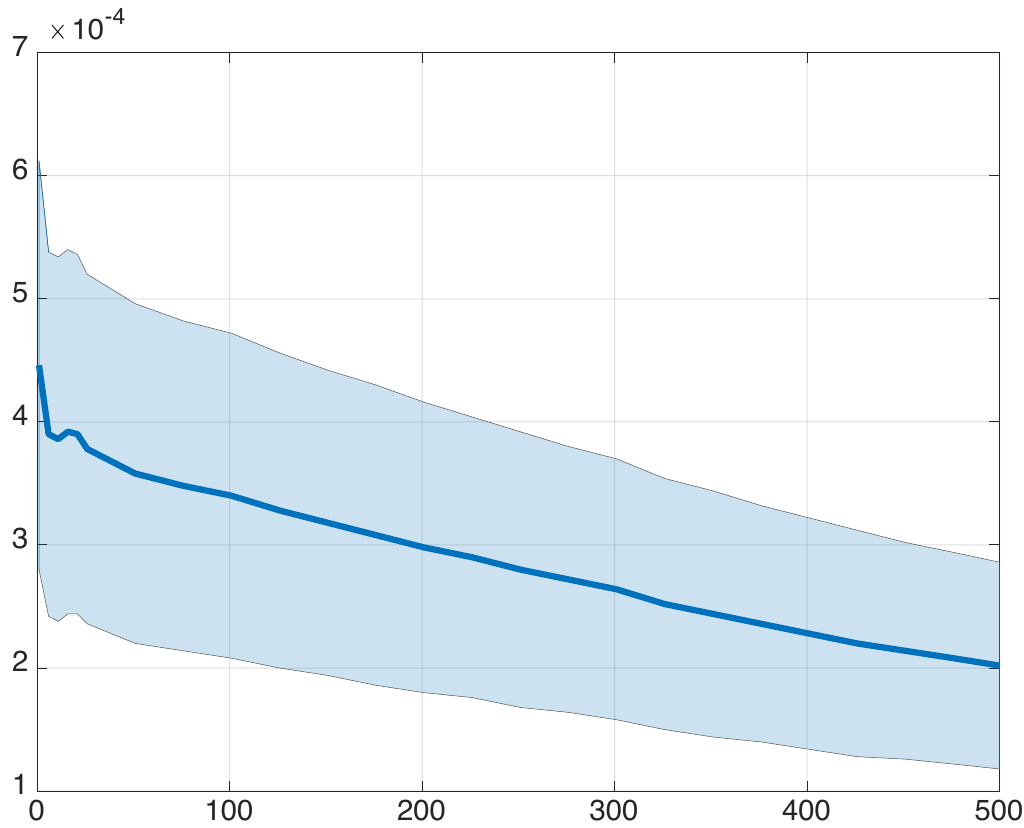}
    \caption{}
  \end{subfigure}
\begin{subfigure}[t]{0.40\textwidth}
  \includegraphics[width=\textwidth]{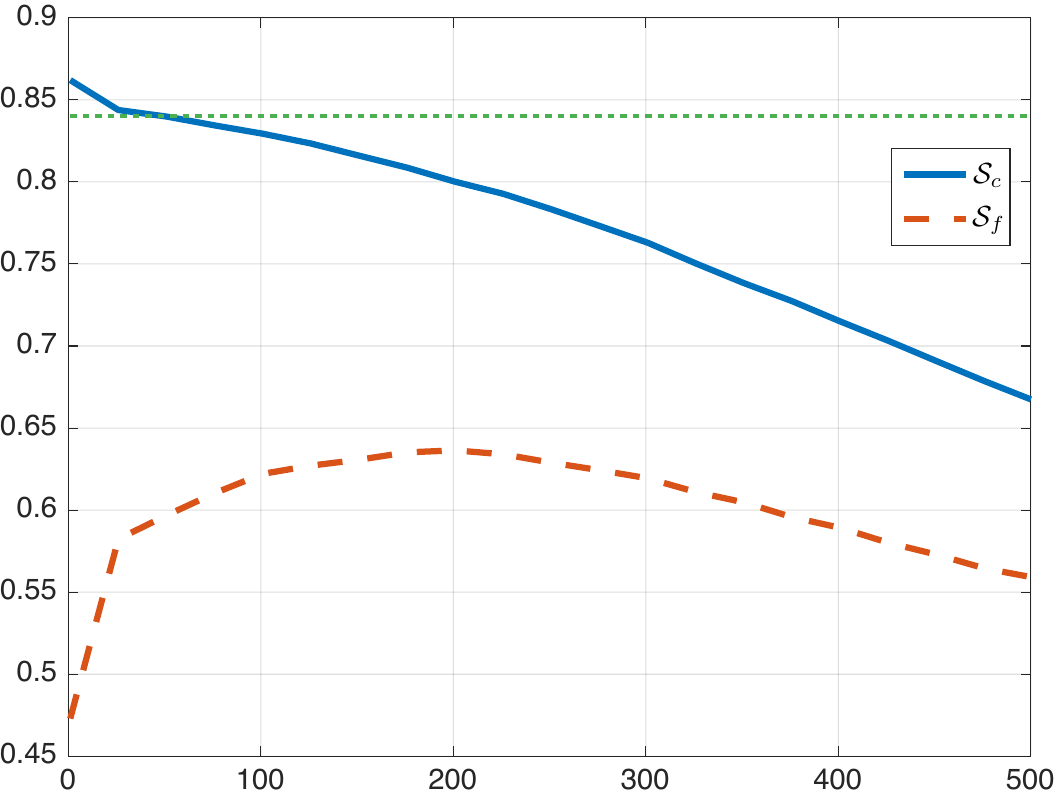}
  \caption{}
\end{subfigure}
\caption{\textbf{(a)} \label{fig:curvature_dimension} Average curvature $\overline{\kappa}_{\mathcal{S}}$, averaged over 1000 \textit{validation} datapoints, as a function of the subspace dimension. \textbf{(b)} \label{fig:compare_subspaces} Fooling rate of universal perturbations (on an unseen \textit{validation} set) computed using random perturbations in 1) $\mathcal{S}_c$: the subspace of positively curved directions, and 2) $\mathcal{S}_f$: the subspace collecting normal vectors $\r(\x)$. The dotted line corresponds to the fooling rate using the algorithm in \cite{moosavi2017universal}. $\mathcal{S}_f$ corresponds to the largest singular vectors corresponding to the matrix gathering the \textit{normal vectors} $\r(\x)$ in the training set (similar to the approach in \cite{moosavi2017universal}).}
\end{figure}
Recall that Theorem \ref{thm:curvature} suggests a novel way to procedure to generate universal perturbations; in fact, random perturbations from $\mathcal{S}_c$ are predicted to be universal perturbations.
To assess the validity of this result, Fig. \ref{fig:compare_subspaces} (b) illustrates the fooling rate of the universal perturbations (for the LeNet network on CIFAR-10) sampled uniformly at random from the unit sphere in subspace $\mathcal{S}_c$, and scaled to have a fixed norm ($1/5$th of the norm of the random noise required to fool most data points). We assess the quality of such perturbation by further indicating in Fig. \ref{fig:compare_subspaces} (b) the fooling rate of the universal perturbation computed using the algorithm in \cite{moosavi2017universal}.  
Observe that random perturbations sampled from $\mathcal{S}_c$ (with $m$ small) provide very poweful universal perturbations, fooling nearly $85\%$ of data points from the validation set. This rate is comparable to that of the algorithm in \cite{moosavi2017universal}, while using much less training points (only $n = 100$, while at least $2,000$ training points are required by \cite{moosavi2017universal}). The very large fooling rates achieved with such a simple procedure (random generation in $\mathcal{S}_c$) confirms that the curvature is the governing factor that controls the robustness of classifiers to universal perturbations, as analyzed in Section \ref{sec:curved}.
In particular, such high fooling rates cannot be achieved by only using the model of Section \ref{sec:locally_flat_model} (neglecting the curvature information), as illustrated in Fig. \ref{fig:compare_subspaces} (b).

\begin{figure}
  \centering
  \includegraphics[width=0.7\textwidth]{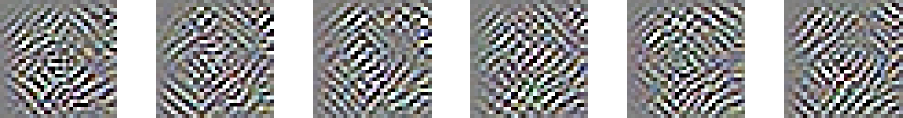}
  \caption{\label{fig:diversity}Diversity of universal perturbations randomly sampled from the subspace $\mathcal{S}_c$. The normalized inner product between two perturbations is less than $0.1$.}
\end{figure}
\begin{figure}[h!]
\centering
\includegraphics[width=0.4\textwidth]{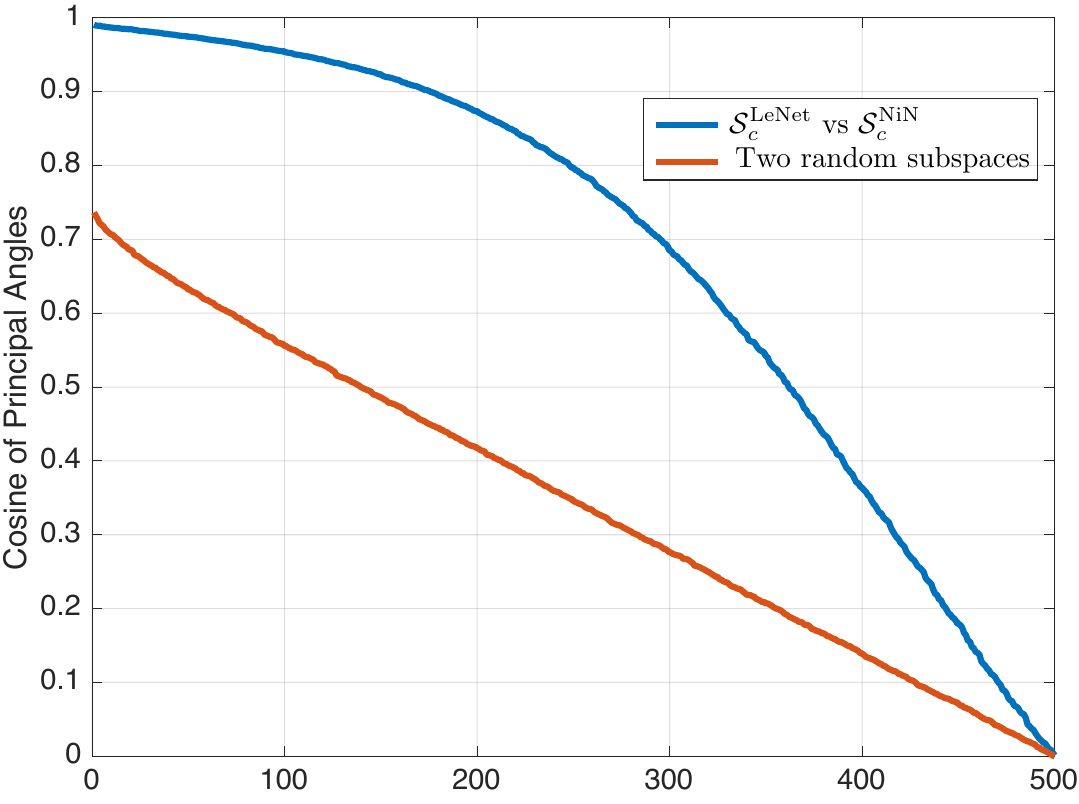}
\caption{\label{fig:principal_angles} Cosine of principal angles between $\mathcal{S}_c^{\text{LeNet}}$ and $\mathcal{S}_c^{\text{NiN}}$. For comparison, cosine of angles between two random subspaces is also shown.}
\end{figure}
The existence of this subspace $\mathcal{S}_c$ (and that universal perturbations are random vectors in $\mathcal{S}_c$) further explains the high \textit{diversity} of universal perturbations. Fig. \ref{fig:diversity} illustrates different universal perturbations for CIFAR-10 computed by sampling random directions from $\mathcal{S}_c$. The diversity of such perturbations justifies why re-training with perturbed images (as in \cite{moosavi2017universal}) does \textit{not} significantly improve the robustness of such networks, as other directions in $\mathcal{S}_c$ can still lead to universal perturbations, even if the network becomes robust to some directions. Finally, it is interesting to note that this subspace $\mathcal{S}_c$ is likely to be shared not only across datapoitns, but also different networks (to some extent).
To support this claim, Fig. \ref{fig:principal_angles} shows the cosine of the principal angles between subspaces $\mathcal{S}_c^{\text{LeNet}}$ and $\mathcal{S}_c^{\text{NiN}}$, computed for  LeNet and NiN \cite{lin2013} models. Note that the first principal angles between the two subspaces are very small, leading to shared directions between the two subspaces. A similar observation is made for networks trained on ImageNet in the appendix. The sharing of $\mathcal{S}_c$ across different networks explains the transferability of universal perturbations observed in \cite{moosavi2017universal}.

\vspace{-0.3cm}
\section{Conclusions}
\vspace{-0.1cm}
In this paper, we analyzed the robustness of classifiers to universal perturbations, under two decision boundary models. We showed that classifiers satisfying the flat decision model are not robust to universal directions, provided the normal vectors in the vicinity of natural images are correlated. While this model explains the vulnerability for e.g., linear classifiers, this model discards the curvature information, which is essential to fully analyze the robustness of deep nets to universal perturbations. We show in fact that classifiers with \textit{curved} decision boundaries are not robust to universal perturbations, provided the existence of a shared subspace along which the decision boundary is positively curved (for most directions). We empirically verify this assumption for deep nets. Our analysis hence explains the existence of universal perturbations, and further provides a purely geometric approach for computing such perturbations, in addition to explaining many properties of perturbations, such as their diversity.

Our analysis hence shows that to construct robust classifiers, it is key to \textit{suppress} this subspace of shared positive directions, which can possibly be done through regularization of the objective function. This will be the subject of future works.

\subsubsection*{Acknowledgments}
We gratefully acknowledge the support of NVIDIA Corporation with the donation of the Titan X Pascal GPU used for this research. This work has been partly supported by the Hasler Foundation, Switzerland, in the framework of the ROBERT project. A.F was supported by the Swiss National
Science Foundation under grant P2ELP2-168511. S.S. was supported by ONR N00014-17-1-2072 and ARO W911NF-15-1-0564.

\small{
\bibliographystyle{ieeetr}
\bibliography{bibliography}
}
\newpage
\appendix
\section{Proof of Theorem 1}

We first start by recalling a result from \cite{nips2016_ours}, which is based on \cite{dasgupta2003elementary}.

\begin{lemma}
\label{thm:jl_ours}
Let $\v$ be a random vector uniformly drawn from the unit sphere  $\mathbb{S}^{d-1}$, and $\P_m$ be the projection matrix onto the first $m$ coordinates. Then,
\begin{align}
\Pbb\left( \beta_1(\delta, m) \frac{m}{d} \leq \| \P_m \v \|_2^2 \leq \beta_2(\delta, m) \frac{m}{d} \right) \geq 1 - 2\delta,
\end{align}
with $\beta_1(\delta, m) = \max((1/e) \delta^{2/m}, 1-\sqrt{2(1-\delta^{2/m})}$, and $\beta_2(\delta, m) = 1 + 2 \sqrt{\frac{ \ln(1/\delta)}{m}} + \frac{2 \ln(1/\delta)}{m}$.
\end{lemma}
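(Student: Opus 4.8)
The plan is to reconstruct the elementary Chernoff argument underlying \cite{dasgupta2003elementary,nips2016_ours}: realize the uniform vector through i.i.d.\ Gaussians, identify $\|\P_m\v\|_2^2$ as a ratio of independent chi-square variables, and bound each tail by an exponential Markov inequality. First I would write $\v = g/\|g\|_2$, where $g=(g_1,\dots,g_d)$ has i.i.d.\ $\mathcal{N}(0,1)$ coordinates; this vector is uniform on $\mathbb{S}^{d-1}$, and by rotational invariance of $g$ I may take $\P_m$ to be the projection onto the first $m$ coordinates. Then
\[
\|\P_m\v\|_2^2 \;=\; \frac{S_1}{S_1+S_2}, \qquad S_1=\sum_{i=1}^m g_i^2 \sim \chi^2_m, \quad S_2=\sum_{i=m+1}^d g_i^2 \sim \chi^2_{d-m},
\]
with $S_1,S_2$ independent; equivalently $\|\P_m\v\|_2^2 \sim \mathrm{Beta}(m/2,(d-m)/2)$, which has mean $m/d$, so both bounds in the statement are symmetric deviation estimates around $m/d$.

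For the lower tail, fix $\beta\in(0,1)$. Since $\{\|\P_m\v\|_2^2\le\beta m/d\}=\{(d-\beta m)S_1-\beta m S_2\le 0\}$ with $d-\beta m>0$, the inequality $\Pbb(X\le 0)\le\Ebb[e^{-tX}]$ ($t\ge 0$), independence, and the Gaussian identity $\Ebb[e^{sg_i^2}]=(1-2s)^{-1/2}$ for $s<1/2$ yield
\[
\Pbb\!\left(\|\P_m\v\|_2^2\le\tfrac{\beta m}{d}\right) \;\le\; \bigl(1+2t(d-\beta m)\bigr)^{-m/2}\bigl(1-2t\beta m\bigr)^{-(d-m)/2}, \qquad 0\le t<\tfrac{1}{2\beta m}.
\]
The optimal $t$ is explicit, $t^\star=(1-\beta)/(2\beta(d-\beta m))$; substituting it and using $\ln(1+x)\le x$ collapses the right-hand side to $\exp\!\bigl(\tfrac{m}{2}(1-\beta+\ln\beta)\bigr)$. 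The upper tail is symmetric: for $\beta>1$, $\{\|\P_m\v\|_2^2\ge\beta m/d\}=\{\beta m S_2-(d-\beta m)S_1\le 0\}$, and the same moment-generating-function computation (minding the sign of $d-\beta m$ so that each $(1-2s)^{-1/2}$ has $s<1/2$) again gives $\exp\!\bigl(\tfrac{m}{2}(1-\beta+\ln\beta)\bigr)$.

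It then remains to solve $\exp\!\bigl(\tfrac{m}{2}(1-\beta+\ln\beta)\bigr)\le\delta$, i.e.\ $\beta-\ln\beta-1\ge\tfrac{2}{m}\ln(1/\delta)$, choosing $\beta$ as large as possible below $1$ (lower tail) and as small as possible above $1$ (upper tail). For the upper tail, $\beta=1+2\sqrt{\ln(1/\delta)/m}+2\ln(1/\delta)/m$ is admissible, since the condition reduces to $e^{y}\ge 1+y+y^2/2$ with $y=2\sqrt{\ln(1/\delta)/m}$; this is precisely the Laurent--Massart-type form $\beta_2(\delta,m)$. For the lower tail, $\beta=(1/e)\,\delta^{2/m}$ gives $\beta-\ln\beta-1=(1/e)\delta^{2/m}+\tfrac{2}{m}\ln(1/\delta)\ge\tfrac{2}{m}\ln(1/\delta)$, hence is admissible; when $\delta^{2/m}>1/2$ the sharper choice $\beta=1-\sqrt{2(1-\delta^{2/m})}$ is admissible via the refinement $\ln(1-x)\le-x-x^2/2$, and taking the larger of the two yields $\beta_1(\delta,m)=\max\{(1/e)\delta^{2/m},\,1-\sqrt{2(1-\delta^{2/m})}\}$. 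A union bound over the two one-sided failure events costs the factor $2\delta$. \textbf{Main obstacle:} the ideas are standard, so the real work is bookkeeping --- tracking when $d-\beta m$ is positive or negative so that every moment generating function is finite, and picking exactly the right elementary $\ln$-inequalities so that the optimization in $t$ produces the two explicit branches of $\beta_1$ and the closed form $\beta_2$ rather than an implicitly-defined threshold.
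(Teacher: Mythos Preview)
The paper does not actually prove this lemma: it merely \emph{recalls} it from \cite{nips2016_ours}, noting that the argument goes back to \cite{dasgupta2003elementary}. Your proposal reconstructs precisely that Dasgupta--Gupta Chernoff argument, and the reconstruction is correct in all essential steps: the Gaussian representation, the reduction to a chi-square ratio, the optimized exponential Markov bound yielding $\exp\!\bigl(\tfrac{m}{2}(1-\beta+\ln\beta)\bigr)$, and the inversion to obtain $\beta_1$ and $\beta_2$. One small point worth tightening: for the second branch of $\beta_1$, the inequality you actually need is $-x-\ln(1-x)\ge -\ln(1-x^2/2)$ on $(0,1)$, which follows cleanly by differentiating $h(x)=\ln(1-x^2/2)-x-\ln(1-x)$ and checking $h(0)=0$, $h'(x)>0$; the hint ``$\ln(1-x)\le -x-x^2/2$'' alone is not quite enough because the target $-\ln(1-x^2/2)$ also exceeds $x^2/2$.
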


We use the above lemma to prove our result, which we recall as follows:

\begin{theorem}
Let $\xi \geq 0, \delta \geq 0$. Let $\mathcal{S}$ be an $m$ dimensional subspace such that$\| P_{\mathcal{S}} \r(\x) \|_2 \geq 1 - \xi \text{ for almost all } \x \sim \mu,$, where $P_{\mathcal{S}}$ is the projection operator on the subspace. Assume moreover that $\mathscr{L}_s \left(\x, \rho\right)$ holds for almost all $\x \sim \mu$, with $\rho = \frac{\sqrt{e m}}{\delta(1-\xi)}$. Then, there exists a universal noise vector $\v$, such that $\| \v \|_2 \leq \rho$ and $\pr{x\sim\mu}{\fe} \geq 1 - \delta.$
\end{theorem}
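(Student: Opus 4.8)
The plan is a probabilistic-method argument: I would build $\v$ by drawing it uniformly from the sphere of radius $\rho$ inside the subspace $\mathcal{S}$, show that the \emph{expected} (over this draw) $\mu$-mass of $\x$ that gets fooled is at least $1-\delta$, and conclude that at least one realization $\v^\star$ of norm $\rho$ achieves this. So let $\v = \rho\,\u$ with $\u$ uniform on the unit sphere $\mathbb{S}^{m-1}$ of $\mathcal{S}$; then $\|\v\|_2=\rho$ deterministically, so $\v \in B(\rho)$ and the hypothesis $\mathscr{L}_s(\x,\rho)$ is applicable.

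Fix $\x$ with $\|P_{\mathcal{S}}\r(\x)\|_2 \geq 1-\xi$ (true for $\mu$-a.e. $\x$) and write $\w = P_{\mathcal{S}}\r(\x)$. Since $\u \in \mathcal{S}$ we have $\r(\x)^T\v = \rho\,\w^T\u = \rho\|\w\|_2\,u_1$ with $u_1 = \langle \w/\|\w\|_2,\u\rangle$, and by rotational invariance of the uniform law on $\mathbb{S}^{m-1}$, $u_1$ is distributed as one fixed coordinate of a uniform point on $\mathbb{S}^{m-1}$. Using $\|\r(\x)\|_2=1$, $\|\w\|_2\geq 1-\xi$ and the prescribed value $\rho = \sqrt{em}/(\delta(1-\xi))$, the event $|u_1|\geq \delta/\sqrt{em}$ forces $|\r(\x)^T\v| = \rho\|\w\|_2|u_1| \geq \rho(1-\xi)\,\delta/\sqrt{em} = 1 = \|\r(\x)\|_2^2$, which by $\mathscr{L}_s(\x,\rho)$ entails $\hat{k}(\x+\v)\neq\hat{k}(\x)$ or $\hat{k}(\x-\v)\neq\hat{k}(\x)$. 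To bound the complementary event I would invoke Lemma~\ref{thm:jl_ours} with the ambient dimension equal to $m$ and the projection onto a single coordinate (i.e. ``$m$''$=1$, ``$d$''$=m$), in its one-sided form — the lower-tail estimate is exactly what the Dasgupta--Gupta-type proof gives, with failure probability $\delta$ rather than $2\delta$. Since $\beta_1(\delta,1) = \max(\delta^2/e,\,1-\sqrt{2(1-\delta^2)}) \geq \delta^2/e$, this yields $\Pbb_{\u}\!\left(u_1^2 \geq \tfrac{\delta^2}{em}\right) \geq 1-\delta$, hence $\Pbb_{\u}\!\left(|u_1| < \tfrac{\delta}{\sqrt{em}}\right) \leq \delta$. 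Combining, for $\mu$-a.e. $\x$ the probability over $\v$ that $\x$ is \emph{not} fooled is at most $\delta$.

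Finally, by Fubini, $\Ebb_{\v}\,\Pbb_{\x\sim\mu}(\x\text{ not fooled}) = \Ebb_{\x\sim\mu}\,\Pbb_{\v}(\x\text{ not fooled}) \leq \delta$ (the $\mu$-null set where the hypotheses fail contributes nothing), so some realization $\v^\star$ with $\|\v^\star\|_2 = \rho$ satisfies $\Pbb_{\x\sim\mu}(\x\text{ not fooled}) \leq \delta$, i.e. $\Pbb_{\x\sim\mu}(\hat{k}(\x+\v^\star)\neq\hat{k}(\x)\text{ or }\hat{k}(\x-\v^\star)\neq\hat{k}(\x)) \geq 1-\delta$, which is the claim. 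The only delicate point is the tail bound for a single spherical coordinate: one must use the one-sided version of Lemma~\ref{thm:jl_ours} with the roles of $m$ and $d$ swapped so that (i) the failure probability is $\delta$ and not $2\delta$, and (ii) the threshold $\beta_1(\delta,1)/m \geq \delta^2/(em)$ matches the value of $\rho$ in the statement; everything else is linearity of expectation and unwinding the definition of $\mathscr{L}_s$.
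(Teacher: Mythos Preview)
Your proposal is correct and follows essentially the same route as the paper: draw $\v$ uniformly from the sphere of radius $\rho$ in $\mathcal{S}$, swap the order of expectation via Fubini, reduce via $\mathscr{L}_s(\x,\rho)$ to the event $|\r(\x)^T\v|\geq 1$, project $\r(\x)$ onto $\mathcal{S}$, and invoke Lemma~\ref{thm:jl_ours} (with the roles of $m$ and $d$ set so that you are bounding a single coordinate on $\mathbb{S}^{m-1}$) to get the $1-\delta$ tail bound. Your explicit remark that only the one-sided lower tail of Lemma~\ref{thm:jl_ours} is needed (yielding failure probability $\delta$ rather than $2\delta$) is in fact a point the paper leaves implicit; otherwise the arguments coincide.
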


\begin{proof}
Define $\Sbb$ to be the unit sphere centered at $0$ in the subspace $\mathcal{S}$. Let $\rho = \frac{\sqrt{e m}}{\delta (1-\xi)}$, and denote by $\rho \Sbb$ the sphere scaled by $\rho$.
We have
\begin{align*}
& \ex{\v \sim \rho \Sbb}{ \pr{\x \sim \mu}{\fe} } \\
 = & \ex{\x \sim \mu}{\pr{ \v \sim \rho \Sbb}{\fe}} \\
\geq & \ex{\x \sim \mu}{\pr{\v \sim \rho \Sbb}{ |\r(\x)^T \v| - \| \r(\x) \|_2^2 \geq 0}} \\
 = & \ex{\x \sim \mu}{\pr{\v \sim \rho \Sbb}{ |(P_{\mathcal{S}} \r(\x) + P_{\mathcal{S}^\text{orth}} \r(\x))^T \v| - \| \r(\x) \|_2^2 \geq 0}},
\end{align*}
 where $P_{\mathcal{S}^\text{orth}}$ denotes the projection operator on the orthogonal of $\mathcal{S}$. Observe that $
 (P_{\mathcal{S}^\text{orth}} \r(\x))^T \v = 0$. Note moreover that $\| \r(\x) \|_2^2 = 1$ by assumption. Hence, the above expression simplifies to
 \begin{align*}
& \ex{\x \sim \mu}{\pr{\v \sim \rho \Sbb}{ |(P_{\mathcal{S}} \r(\x))^T \v| - 1 \geq 0}} \\
  = & \ex{\x \sim \mu}{\pr{\v \sim \Sbb}{ |(P_{\mathcal{S}} \r(\x))^T \v |  \geq \rho^{-1}}} \\
%
 \geq & \ex{\x \sim \mu}{\pr{\v \sim \Sbb}{ \left| \frac{(P_{\mathcal{S}} \r(\x))^T}{\| P_{\mathcal{S}} \r(\x) \|_2} \v \right| \geq \frac{\delta}{\sqrt{em}} }},
\end{align*}
where we have used the assumption of the projection of $\r(\x)$ on the subspace $\S$. Hence, it follows from Lemma \ref{thm:jl_ours} that
\[
 \ex{\v \sim \rho \Sbb}{ \pr{\x \sim \mu}{\fe} } \geq 1 - \delta.
\]
Hence, there exists a universal vector $\v$ of $\ell_2$ norm $\rho$  such that $\pr{\x \sim \mu}{\fe} \geq 1-\delta$.
\end{proof}

\section{Proof of Theorem 2}

\begin{theorem}
\label{thm:curvature}
Let $\kappa > 0, \delta > 0$ and $m \in \bb{N}$. Assume that the quadratic decision boundary model $\mathscr{Q} \left(\x, \rho \right)$ holds for almost all $\x \sim \mu$, with
$\rho = \sqrt{\frac{2 \log(2/\delta)}{m}} \kappa^{-1} + \kappa^{-1/2}$.
Let $\S$ be a $m$ dimensional subspace such that
$$\pr{\v \sim \mathbb{S}}{\forall \u \in \Rbb^2, \alpha_x^{-1} \u^T H_{\z}^{\r(\x), \v} \u \geq \kappa \| \u \|_2^2} \geq 1 - \beta \text{ for almost all } \x \sim \mu,$$
where $H_{\z}^{\r(\x), \v} = \Pi^T H_{\z} \Pi$ with $\Pi$ an orthonormal basis of $\text{span} (\r(\x), \v)$, and $\Sbb$ denotes the unit sphere in $\mathcal{S}$.
Then, there is a universal perturbation vector $\v$ such that $\| \v \|_2 \leq \rho$ and $\pr{\x \sim \mu}{\hat{k}(\x+\v) \neq \hat{k}(\x)} \geq 1 - \delta - \beta$.
\end{theorem}

\begin{proof}
Let $\x \sim \mu$.  We have
\begin{align*}
& \ex{\v \sim \rho \Sbb}{\pr{\x \sim \mu}{\hat{k}(\x+\v) \neq \hat{k}(\x)}} \\
= & \ex{\x \sim \mu}{\pr{\v \sim \rho \Sbb}{\hat{k}(\x+\v) \neq \hat{k}(\x)}} \\
\geq & \ex{\x \sim \mu}{\pr{\v \sim \rho \Sbb}{\alpha_{x}^{-1} (\v-\r)^T H_z (\v-\r) + \r^T(\v-\r) \geq 0}} \\
= & \ex{\x \sim \mu}{\pr{\v \sim \Sbb}{\alpha_{x}^{-1} (\rho \v-\r)^T H_z (\rho \v-\r) + \r^T(\rho \v-\r) \geq 0}}
\end{align*}
Using the assumptions of the theorem, we have
\begin{align*}
& \pr{\v \sim \Sbb}{\alpha_{x}^{-1} (\rho \v-\r)^T H_z (\rho \v-\r) + \r^T(\rho \v-\r) \leq 0} \\
\leq & \pr{\v \sim \Sbb}{\kappa \| \rho \v - \r \|_2^2 + \r^T (\rho \v - \r) \leq 0} + \beta \\
\leq & \pr{\v \sim \Sbb}{\rho (1-2\kappa) \v^T \r + \kappa \rho^2 + (\kappa - 1) \leq 0} + \beta \\
\leq & \pr{\v \sim \Sbb}{\rho (1-2\kappa) \v^T \r \leq - \epsilon} + \pr{\v \sim \Sbb}{\kappa \rho^2 + (\kappa - 1) \leq \epsilon} + \beta,
\end{align*}
for $\epsilon > 0$. The goal is therefore to find $\rho$ such that $\kappa \rho^2 + (\kappa - 1) \geq \epsilon$, together with $\pr{\v \sim \Sbb}{\rho (1-2\kappa) \v^T \r \leq - \epsilon} \leq \delta$. Let $\rho^2 = \frac{\epsilon + 1}{\kappa}$. Using the concentration of measure on the sphere \cite{matouvsek2002lectures}, we have
\[
\pr{\v \sim \Sbb}{\v^T \r \leq \frac{-\epsilon}{\rho(1-2\kappa)}} \leq 2 \exp\left( - \frac{m \epsilon^2}{2 \rho^2 (1-2\kappa)^2} \right).
\]
To bound the above probability by $\delta$, we set
$\epsilon = C \frac{\rho}{\sqrt{m}}$, where $C = \sqrt{2 \log(2/\delta)}$. We therefore choose $\rho$ such that
\[
\rho^2 = \kappa^{-1} \left( C \rho m^{-1/2} + 1 \right)
\]
The solution of this second order equation gives
\[
\rho = \frac{C \kappa^{-1} m^{-1/2} + \sqrt{\kappa^{-2} C^2 m^{-1} + 4 \kappa^{-1}}}{2}
		\leq C \kappa^{-1} m^{-1/2} + \kappa^{-1/2}.
\]
Hence, for this choice of $\rho$, we have by construction
\[
\pr{\v \sim \Sbb}{\alpha_{x}^{-1} (\rho \v-\r)^T H_z (\rho \v-\r) + \r^T(\rho \v-\r) \leq 0} \leq \delta + \beta.
\]
We therefore conclude that $\ex{\v \sim \rho \Sbb}{\pr{\x \sim \mu}{\hat{k}(\x+\v) \neq \hat{k}(\x)}} \geq 1 - \delta - \beta$. This shows the existence of a universal noise vector $\v \sim \rho \Sbb$ such that $\hat{k}(\x+\v) \neq \hat{k}(\x)$ with probability larger than $1-\delta-\beta$.
\end{proof}
\section{Transferability of universal perturbations}
\begin{figure}[h]
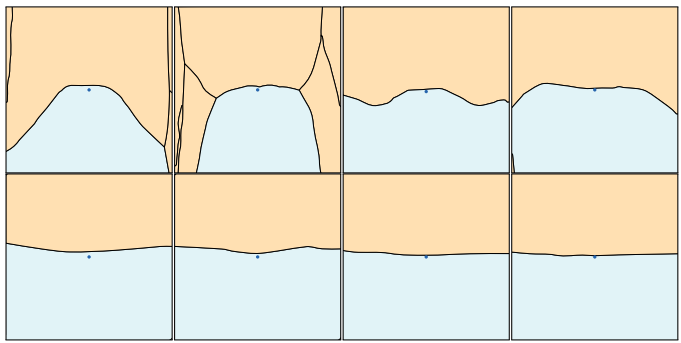

\centering
\newcounter{itr2}
\forloop{itr2}{5}{\value{itr2}<9}{
  \begin{subfigure}{0.11\textwidth}
    \includegraphics[width=1.1\textwidth,frame]{\arabic{itr2}.pdf}
  \end{subfigure}
}

\forloop{itr2}{5}{\value{itr2}<9}{
  \begin{subfigure}{0.11\textwidth}
    \includegraphics[width=1.1\textwidth,frame]{\arabic{itr2}_rand.pdf}
  \end{subfigure}
}
\caption{\label{fig:transferable}Transferability of the subspace $\mathcal{S}_c$ across different \textit{networks}. The first row shows normal cross sections along a fixed direction in $\mathcal{S}_c$ for VGG-16, with a subspace $\mathcal{S}_c$ computed with CaffeNet. Note the positive curvature in most cases. To provide a baseline for comparison, the second row illustrates normal sections along random directions.}
\label{fig:cross_sections}
\end{figure}

Fig. \ref{fig:transferable} shows examples of normal cross-sections of the decision boundary across a \textit{fixed} direction in $\mathcal{S}_c$, for the VGG-16 architecture (but where $\mathcal{S}_c$ is computed for \textit{CaffeNet}). Note that the decision boundary across this \textit{fixed} direction is positively curved for both networks, albeit computing this subspace for a distinct network. The sharing of $\mathcal{S}_c$ across different nets explains the transferability of universal perturbations observed in \cite{moosavi2017universal}.

\end{document}